\documentclass[conference,letterpaper]{IEEEtran}
\IEEEoverridecommandlockouts
\usepackage[letterpaper, left=0.75in, right=0.75in, bottom=0.75in, top=0.75in]{geometry}
\usepackage{cite}
\usepackage{amsmath,amssymb,amsfonts}
\usepackage{algorithmic}
\usepackage{graphicx}
\usepackage{textcomp}
\usepackage{xcolor}

\usepackage{amsmath,amsfonts,bm}









\def\eqref#1{equation~\ref{#1}}









\def\1{\bm{1}}










\DeclareMathAlphabet{\mathsfit}{\encodingdefault}{\sfdefault}{m}{sl}
\SetMathAlphabet{\mathsfit}{bold}{\encodingdefault}{\sfdefault}{bx}{n}













\usepackage{hyperref}
\usepackage{url}
\usepackage[utf8]{inputenc}
\usepackage[T1]{fontenc}
\usepackage{booktabs}
\usepackage{amsfonts}
\usepackage{nicefrac}
\usepackage{microtype}
\usepackage{amsmath}
\usepackage{amsthm}
\usepackage{array}
\usepackage{bm}
\usepackage{graphicx}
\usepackage{thmtools}

\newcommand{\diag}{\operatorname{diag}}

\newcommand{\modelacro}{HMRNN}

\newcommand{\relu}{\operatorname{ReLu}}

\newtheorem{lemma}{Lemma}[section]
\newtheorem{theorem}{Theorem}[section]
\newtheorem{definition}{Definition}[section]

\def\BibTeX{{\rm B\kern-.05em{\sc i\kern-.025em b}\kern-.08em
    T\kern-.1667em\lower.7ex\hbox{E}\kern-.125emX}}
\begin{document}

\title{Hidden Markov models as recurrent neural networks: an application to Alzheimer's disease}

\author{\IEEEauthorblockN{1\textsuperscript{st} Matt Baucum}
\IEEEauthorblockA{\textit{Industrial \& Systems Engineering}\\
\textit{University of Tennessee}\\
Knoxville, TN, U.S.A.\\
mbaucum1@vols.utk.edu}
\and
\IEEEauthorblockN{2\textsuperscript{nd} Anahita Khojandi}
\IEEEauthorblockA{\textit{Industrial \& Systems Engineering} \\
\textit{University of Tennessee}\\
Knoxville, TN, U.S.A. \\
khojandi@utk.edu}
\and
\IEEEauthorblockN{3\textsuperscript{rd} Theodore Papamarkou}
\IEEEauthorblockA{\textit{Department of Mathematics} \\
\textit{The University of Manchester}\\
Manchester, U.K.\\
\textit{Computational Sciences \& Engineering Division} \\
\textit{Oak Ridge National Laboratory}\\
Oak Ridge, TN, U.S.A.\\
theodore.papamarkou@manchester.ac.uk}
}

\maketitle

\begin{abstract}
Hidden Markov models (HMMs) are commonly used for disease progression modeling when the true patient health state is not fully known. Since HMMs typically have multiple local optima, incorporating additional patient covariates can improve parameter estimation and predictive performance. To allow for this, we develop hidden Markov recurrent neural networks ({\modelacro}s), a special case of recurrent neural networks that combine neural networks' flexibility with HMMs' interpretability. The {\modelacro} can be reduced to a standard HMM, with an identical likelihood function and parameter interpretations, but it can also combine an HMM with other predictive neural networks that take patient information as input. The HMRNN estimates all parameters simultaneously via gradient descent. Using a dataset of Alzheimer's disease patients, we demonstrate how the {\modelacro} can combine an HMM with other predictive neural networks to improve disease forecasting and to offer a novel clinical interpretation compared with a standard HMM trained via expectation-maximization.
\end{abstract}

\begin{IEEEkeywords}
hidden Markov models, neural networks, disease progression
\end{IEEEkeywords}

\section{Introduction}
Hidden Markov models (HMMs; \cite{hmm_seminal}) are commonly used for modeling disease progression, because they capture complex and noisy clinical measurements as originating from a smaller set of latent health states. 
When fit to patient data, HMMs yield \textit{state transition probabilities} (the probabilities of patients transitioning between latent health states) and \textit{emission probabilities} (the probabilities of observing patients' true health states), both of which can have useful clinical interpretations. State transition probabilities describe the dynamics of disease progression, while emission probabilities describe the accuracy of clinical tests and measurements. Because of their intuitive parameter interpretations and flexibility, HMMs have been used to model sepsis \cite{hmm_sepsis}, Alzheimer's progression \cite{continuous_hmm}, 
and patient response to blood anticoagulants \cite{nemani_paper}.

Researchers may wish to use patient-level covariates to improve the fit of HMM parameter solutions \cite{tumor_cytogenetics}, or to integrate HMMs directly with treatment planning algorithms \cite{nemani_paper}. 
Either modification requires incorporating additional parameters into the HMM, which is typically intractable with expectation-maximization algorithms. Incorporating covariates or additional treatment planning models therefore requires multiple estimation steps (e.g., \cite{tumor_cytogenetics}) changes to HMM parameter interpretation (e.g., \cite{nemani_paper}), or Bayesian estimation, which involves joint prior distributions over all parameters and can suffer from poor convergence in complex models \cite{ryden2008bayesian}.

We present neural networks as a valuable alternative for implementing and solving HMMs for disease progression modeling. Neural networks' substantial modularity allows them to easily incorporate additional input variables (e.g., patient-level covariates) or predictive models and simultaneously estimate all parameters \cite{caelli1999modularity}.
We therefore introduce
Hidden Markov Recurrent Neural Networks ({\modelacro}s) - recurrent neural networks (RNNs) that mimic the computation of hidden Markov models while allowing for substantial modularity with other predictive networks. 

When trained on state observation data only (i.e., the same data used to train HMMs), the {\modelacro} has the same parameter set and likelihood function as a traditional HMM. Yet the HMRNN's neural network structure allows it to incorporate additional patient data; in this case, the HMRNN's likelihood function differs from that of a traditional HMM, but it still yields interpretable state transition and emission parameters (unlike other classes of neural networks). Thus, HMRNNs are a type of RNN with a specific structure that allows their parameters to be interpreted as HMM paramters, and HMRNNs can in turn be reduced to standard HMMs. In this way, {\modelacro}s balance the interpretability of HMMs with the flexibility of neural networks (as shown in Figure \ref{fig:fig2}).


\begin{figure}
    \centering
    \includegraphics[scale=.8]{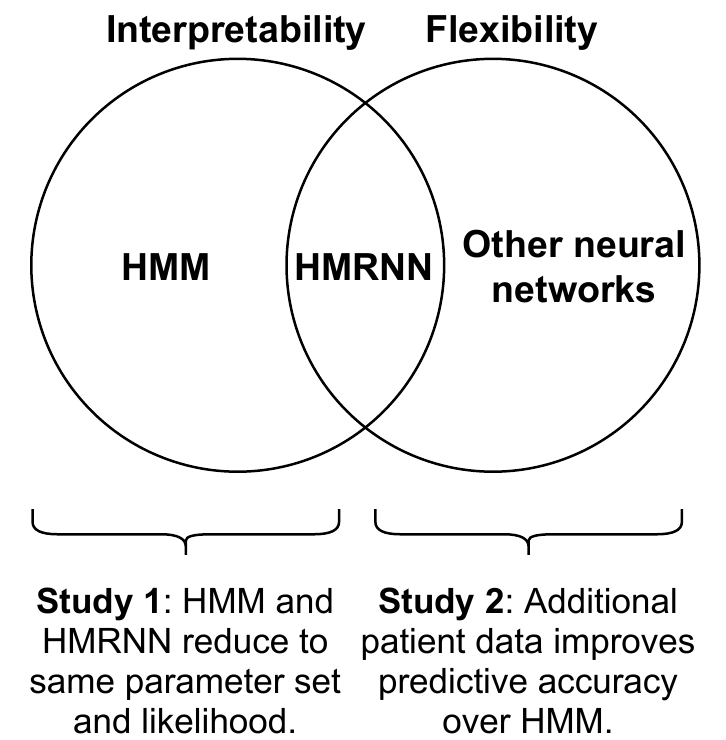}
    \caption{Conceptual overview of HMRNN. `Interpretability' refers to
    model's ability to produce interpretable state transition and emission parameters. `Flexibility' refers to model's ability to incorporate additional data sources in its predictions. When trained on the same data, HMRNNs and HMMs optimize the same likelihood functions and yield the same interpretable parameter solutions (demonstrated in Study 1). The HMRNN can also combine an HMM with an additional predictive neural network, allowing it to use additional patient data not available to standard HMMs (demonstrated in Study 2). 
    }
    \label{fig:fig2}
\end{figure}

Our primary contributions are as follows: (1) We prove how recurrent neural networks (RNNs) can be formulated to optimize the same likelihood function as HMMs, with parameters that can be interpreted as HMM parameters (sections \ref{sec:methods} and \ref{sec:study1}), and (2) we demonstrate the \modelacro's utility for disease progression modeling, by combining it with other predictive neural networks to improve forecasting and offer unique parameter interpretations not afforded by simple HMMs (section \ref{sec:study2}).

\section{Related work}\label{sec:related_research}
A few studies in the speech recognition literature model HMMs with neural networks \cite{wessels_hmm_rnn,alpha_nets}; these implementations require HMM pre-training \cite{wessels_hmm_rnn} or minimize the mutual information criterion \cite{alpha_nets},
and they are not commonly used outside the speech recognition domain.
These works also  
present only theoretical justification, with no empirical comparisons with expectation-maximization algorithms.

A limited number of healthcare studies have also explored connections between neural networks and Markov models. \cite{nemani_paper} employs a recurrent neural network to approximate latent health states underlying patients' ICU measurements. \cite{hmm_nn_surgery} compares HMM and neural network effectiveness in training a robotic surgery assistant, while 
\cite{vae_rl_paper} proposes a generative neural network for modeling ICU patient health based on
HMMs.
These studies differ from our approach of directly formulating HMMs as neural networks, which maintains the interpretability of HMMs while allowing for joint estimation of the HMM with other predictive models.


\section{Methods}\label{sec:methods}

In this section, we briefly review HMM preliminaries, formally define the {\modelacro}, and prove that it optimizes the same likelihood function as a corresponding HMM.

\subsection{HMM preliminaries}\label{sec:preliminaries}
Formally, an HMM models a system over a given time horizon $T$, where the system occupies a hidden state $x_{t} \in S=\{1,\dots,k\}$ at any given time point $t\in \{0,1,\dots,T\}$; that is, $x_{t}=i$ indicates that the system is in the $i$-th state at time $t$. For any state $x_{t} \in S$ and any time point $t \in \{0,1, \dots,T\}$, the system emits an observation according to an emission distribution that is uniquely defined for each state.
We consider the case of categorical emission distributions, which are commonly used in healthcare (e.g., \cite{HMM_breastcancer, hmm_sepsis}).
These systems emit a discrete-valued observation $y_{t} \in O$ at each time $t$,
where $O=\{1,\dots,c\}$.

Thus, an HMM is uniquely defined by a $k$-length initial probability vector $\bm{\pi}$, $k \times k$ transition matrix $\bm{P}$, and $k \times c$ emission matrix $\bm{\Psi}$. Entry $i$ in the vector $\bm{\pi}$ is the probability of starting in state $i$, row $i$ in the matrix $\bm{P}$ is the state transition probability distribution from state $i$, and row $i$ of the matrix $\bm{\Psi}$ is the emission distribution from state $i$. We also define $\diag(\bm{\Psi}_{i})$ as a $k \times k$ diagonal matrix with the $i$-th column of $\bm{\Psi}$ as its entries (i.e., the probabilities of observation $i$ from each of the $k$ states). We define the likelihood of an observation sequence $\bm{y}$ in terms of $\alpha_{t}(i)$, the probability of being in state $i$ at time $t$ \textit{and} having observed $\{y_{0},...,y_{t}\}$. We denote $\bm{\alpha}_{t}$ as the (row) vector of all $\alpha_{t}(i)$ for $i \in S$, with
\begin{equation}
    \bm{\alpha}_{t}=\bm{\pi}^\top \cdot \diag(\bm{\Psi}_{y_{0}}) \cdot (\prod_{i=1}^{t} \bm{P} \cdot \diag(\bm{\Psi}_{y_i}))
\label{eqn:alpha}
\end{equation}
for $t \in \{1,...,T\}$, with $\bm{\alpha}_{0}=\bm{\pi}^\top \cdot \diag(\bm{\Psi}_{y_{0}})$. The likelihood of a sequence $\bm{y}$ is thus given by $\textrm{Pr}(\bm{y})=\bm{\alpha}_{T} \cdot \textbf{1}_{k \times 1}$.


\subsection{{\modelacro} definition}
\label{sec:HMRNN}
An {\modelacro} is a recurrent neural network whose parameters directly correspond to the initial state, transition, and emission probabilities of an HMM. As such, training an HMRNN optimizes the joint log-likelihood of the  $N$ $T$-length observation sequences given these parameters.

\begin{definition}
An {\modelacro} is a recurrent neural network with parameters $\bm{\pi}$ (a $k$-length vector whose entries sum to 1), $\bm{P}$ (a $k \times k$ matrix whose rows sum to one), and $\bm{\Psi}$ (a $k \times c$ matrix whose rows sum to one). It receives $T+1$ input matrices of size $N \times c$, denoted by $\bm{Y}_{t}$ for $t \in \{0,1,\dots,T\}$, where the $n$-th row of matrix $\bm{Y}_{t}$ is a one-hot encoded vector of observation $y_{t}^{(n)}$ for sequence $n \in \{1,\dots,N\}$. The {\modelacro} consists of an inner block of hidden layers that is looped $T+1$ times (for $t \in \{0,1,\dots,T\}$), with each loop containing hidden layers $\bm{h}_{1}^{(t)}$, $\bm{h}_{2}^{(t)}$, and $\bm{h}_{3}^{(t)}$, and a $c$-length input layer $\bm{h}_{y}^{(t)}$ through which the input matrix $\bm{Y}_{t}$ enters the model. The {\modelacro} has a single output unit $o^{(T)}$ whose value is the joint negative log-likelihood of the $N$ observation sequences
under an HMM with parameters $\bm{\pi}$, $\bm{P}$, and $\bm{\Psi}$; the summed value of $o^{(T)}$ across all $N$
observation sequences
is the loss function (minimized via neural network optimization, such as gradient descent).

Layers $\bm{h}_{1}^{(t)}$, $\bm{h}_{2}^{(t)}$, $\bm{h}_{3}^{(t)}$, and $o^{(T)}$ are defined in the following equations. Note that the block matrix in equation (\ref{eqn:GHMNN_2}) is a $c \times (kc)$ block matrix of $c$ $\bm{1}_{1 \times k}$ vectors, arranged diagonally, while the block matrix in equation (\ref{eqn:GHMNN_3}) is a $(kc) \times k$ row-wise concatenation of $c$ $k \times k$ identity matrices.

\begin{align}
\label{eqn:GHMNN_1}
  \bm{h}_{1}^{(t)} & = \begin{cases}
    \bm{\pi}^\top, & t=0,\\
    \bm{h}_{3}^{(t-1)} \bm{P}, & t>0.
  \end{cases}
  \\
\label{eqn:GHMNN_2}
  \bm{h}_{2}^{(t)} & = \!\begin{aligned}[t] & \relu
  \Big(\bm{h}_{1}^{(t)} \begin{bmatrix} \diag(\bm{\Psi}_{1}) \dots \diag(\bm{\Psi}_{c}) \\\end{bmatrix}+
\\
  & \bm{Y}_{t}\begin{bmatrix} \bm{1}_{1 \times k} && \dots && \bm{0}_{1 \times k} \\
  \dots && \dots && \dots \\
  \bm{0}_{1 \times k} && \dots && \bm{1}_{1 \times k} \end{bmatrix} - \bm{1}_{n \times (kc)}
  \Big)
  \end{aligned}
  \\
\label{eqn:GHMNN_3}
  \bm{h}_{3}^{(t)} & = \bm{h}_{2}^{(t)}\begin{bmatrix} \bm{I}_{k} & \dots &  \bm{I}_{k}\\
\end{bmatrix}^{\top}
  \\
\label{eqn:GHMNN_4}
  o^{(t)} & = -\log(\bm{h}_{3}^{(T)}\mathbf{1}_{k \times 1}).
\end{align}

\end{definition}
\begin{figure*}[t]
    \centering
    \includegraphics[scale=.35]{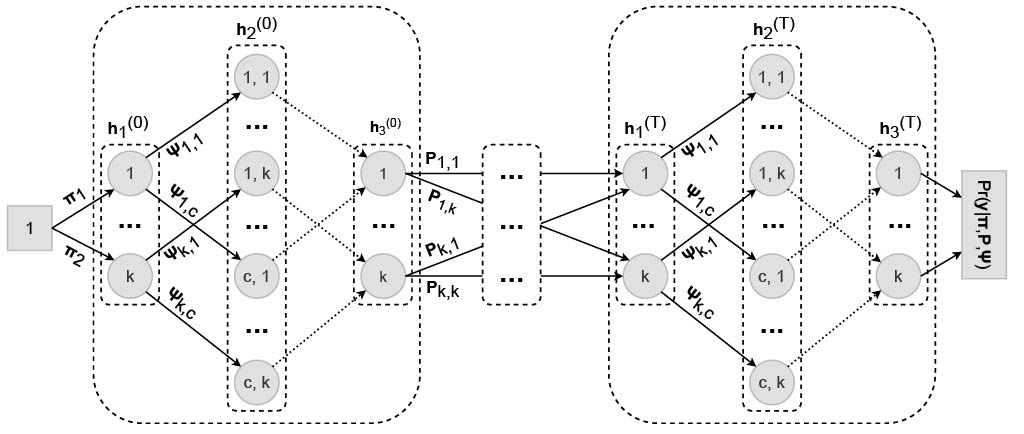}
    \caption{Structure of the hidden Markov recurrent neural network ({\modelacro}). Solid lines indicate learned weights that correspond to HMM parameters; dotted lines indicate weights fixed to 1. The inner block initializes with the initial state probabilities then mimics multiplication by $\diag(\bm{\Psi}_{y_{t}})$; connections between blocks mimic multiplication by $\bm{P}$.}
    \label{fig:GHMNN_diagram}
\end{figure*}
Fig.\ \ref{fig:GHMNN_diagram} outlines the structure of the {\modelacro}. Note that layer $\bm{h}_{3}^{(t)}$ is equivalent to $\bm{\alpha}_{t}$, the probability of being in each hidden state given $\{y_{0},...,y_{t}\}$. Also note that, for long sequences, underflow can be addressed by normalizing layer $\bm{h}_{3}^{(t)}$ to sum to 1 at each time point, then simply subtracting the logarithm of the normalization term (i.e., the log-sum of the activations) from the output $o^{(T)}$.

\subsection{Proof of HMM/{\modelacro} equivalence}\label{sec:proof}
We now formally establish  that the {\modelacro}'s output unit, $o^{(T)}$, is the negative log-likelihood of an observation sequence under an HMM with parameters $\bm{\pi}$, $\bm{P}$, and $\bm{\Psi}$. We prove this for the case of $N=1$ and drop notational dependence on $n$ (i.e., we write $y_{t}^{(1)}$ as $y_{t}$), though extension to $N>1$ is trivial since the log-likelihood of multiple independent sequences is the sum of their individual log-likelihoods. We first rely on the following lemma.

\begin{lemma}\label{lemma1}
If all units in $\bm{h}_{1}^{(t)}(j)$ are between 0 and 1 (inclusive), then $\bm{h}_{3}^{(t)}=\bm{h}_{1}^{(t)} \diag(\bm{\Psi}_{y_{t}})$.
\end{lemma}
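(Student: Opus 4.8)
The plan is to establish the identity by direct substitution through equations (\ref{eqn:GHMNN_2}) and (\ref{eqn:GHMNN_3}), tracking the computation block-by-block inside the rectifier. First I would evaluate the pre-activation argument of the $\relu$ in equation (\ref{eqn:GHMNN_2}). Since $\begin{bmatrix} \diag(\bm{\Psi}_{1}) \dots \diag(\bm{\Psi}_{c}) \end{bmatrix}$ is the horizontal concatenation of the $c$ diagonal emission matrices, the product $\bm{h}_{1}^{(t)} \begin{bmatrix} \diag(\bm{\Psi}_{1}) \dots \diag(\bm{\Psi}_{c}) \end{bmatrix}$ is a length-$kc$ row vector whose $i$-th block of $k$ entries equals $\bm{h}_{1}^{(t)} \diag(\bm{\Psi}_{i})$. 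Because $\bm{Y}_{t}$ is the one-hot encoding of $y_{t}$, multiplying it by the diagonal block matrix of $\bm{1}_{1\times k}$ vectors selects the $y_{t}$-th row, producing a length-$kc$ vector with $\bm{1}_{1\times k}$ in block $y_{t}$ and zeros elsewhere. After subtracting $\bm{1}_{1\times(kc)}$, block $i$ of the argument is $\bm{h}_{1}^{(t)} \diag(\bm{\Psi}_{i})$ when $i=y_{t}$ and $\bm{h}_{1}^{(t)} \diag(\bm{\Psi}_{i}) - \bm{1}_{1\times k}$ when $i\neq y_{t}$.

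Next I would use the hypothesis to resolve the rectifier. Every entry of $\bm{h}_{1}^{(t)} \diag(\bm{\Psi}_{i})$ is a product of $\bm{h}_{1}^{(t)}(j)$, which lies in $[0,1]$ by assumption, and an entry of $\bm{\Psi}$, which lies in $[0,1]$ because the rows of $\bm{\Psi}$ are probability distributions; hence each such entry lies in $[0,1]$. It follows that $\relu$ acts as the identity on block $y_{t}$ (whose entries are already nonnegative) and annihilates every other block (whose entries are at most $1-1=0$ after the subtraction). Thus $\bm{h}_{2}^{(t)}$ equals $\bm{h}_{1}^{(t)} \diag(\bm{\Psi}_{y_{t}})$ in block $y_{t}$ and is zero in all remaining blocks.

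Finally I would apply equation (\ref{eqn:GHMNN_3}): multiplying $\bm{h}_{2}^{(t)}$ by the vertical stack of $c$ copies of $\bm{I}_{k}$ sums the $c$ blocks of $\bm{h}_{2}^{(t)}$, and since only block $y_{t}$ is nonzero this collapses to $\bm{h}_{1}^{(t)} \diag(\bm{\Psi}_{y_{t}})$, giving the claimed identity $\bm{h}_{3}^{(t)}=\bm{h}_{1}^{(t)} \diag(\bm{\Psi}_{y_{t}})$.

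I do not expect a deep obstacle, as the argument is essentially bookkeeping; the one point requiring care is verifying that the stated hypothesis is precisely what makes the rectifier behave as a block selector. In particular, the upper bound $\bm{h}_{1}^{(t)}(j)\le 1$ is what forces every product of $\bm{h}_{1}^{(t)}(j)$ with an entry of $\bm{\Psi}$ to stay at most $1$, so that the non-selected blocks are driven to zero rather than leaving spurious positive residuals that would corrupt the summation in equation (\ref{eqn:GHMNN_3}); the lower bound simultaneously guarantees that the surviving block $y_{t}$ is not truncated.
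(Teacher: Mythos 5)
Your proof is correct and follows essentially the same route as the paper's: a direct computation through equations (\ref{eqn:GHMNN_2}) and (\ref{eqn:GHMNN_3}) in which the hypothesis $\bm{h}_{1}^{(t)}(j)\in[0,1]$, together with the row-stochasticity of $\bm{\Psi}$, makes the $\relu$ act as a block selector that keeps block $y_{t}$ and zeroes out the rest before the stacked identity matrices sum the blocks. The only difference is presentational --- you carry out the bookkeeping blockwise in vector form, whereas the paper writes the same computation unit-by-unit via the scalar expression $\relu(\bm{\Psi}_{m,l}\cdot\bm{h}_{1}^{(t)}(m)+\bm{h}_{y}^{(t)}(l)-1)$ --- and your closing remark correctly isolates the role of the upper bound in preventing spurious positive residuals in the non-selected blocks.
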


\begin{proof}
Let $\bm{h}_{1}^{(t)}(j)$ and $\bm{h}_{3}^{(t)}(j)$ represent the $j$th units of layer $\bm{h}_{1}^{(t)}$ and $\bm{h}_{3}^{(t)}$, respectively, and recall that 
$\bm{h}_{2}^{(t)}$ contains $k \times c$ units, which we index with a tuple $(l,m)$ for $l \in \{1,\dots,c\}$ and $m \in \{1,\dots,k\}$. According to equation (\ref{eqn:GHMNN_2}), the connection between units $\bm{h}_{1}^{(t)}(j)$ and $\bm{h}_{2}^{(t)}(l,m)$ is $\bm{\Psi}_{j,l}$ when $j=m$,  and 0 otherwise. Also recall that matrix $\bm{Y}_{t}$ enters the model through a $c$-length input layer that we denote $\bm{h}_{y}^{(t)}$. According to equation (\ref{eqn:GHMNN_3}),
the connection between unit $\bm{h}_{y}^{(t)}(j)$ and unit $\bm{h}_{2}^{(t)}(l,m)$ is 1 when $j=l$, and 0 otherwise. Thus, unit $\bm{h}_{2}^{(t)}(l,m)$ depends only on $\bm{\Psi}_{m,l}$, $\bm{h}_{1}^{(t)}(m)$, and $\bm{h}_{y}^{(t)}(l)$. Lastly, a bias of $-1$ is added to all units in $\bm{h}_{2}^{(t)}$, which is then subject to a ReLu activation, resulting in the following expression for each unit in $\bm{h}_{2}^{(t)}$:
\begin{equation}\label{eqn:layer2}
   \bm{h}_{2}^{(t)}(l,m)=\relu(\bm{\Psi}_{m,l}\cdot \bm{h}_{1}^{(t)}(m)+\bm{h}_{y}^{(t)}(l)-1).
\end{equation}
\looseness-1 Because $\bm{h}_{y}^{(t)}(l)$ is 1 when $y_{t}=l$, and equals  0 otherwise, then
if all units in $\bm{h}_{1}^{(t)}$ are between 0 and 1, this implies $\bm{h}_{2}^{(t)}(l,m)=\bm{\Psi}_{m,l}\cdot \bm{h}_{1}^{(t)}(m)$ when $j=y_{t}$ and $\bm{h}_{2}^{(t)}(l,m)=0$ otherwise. According to equation (\ref{eqn:GHMNN_4}), the connection between $\bm{h}_{2}^{(t)}(l,m)$ and $\bm{h}_{3}^{(t)}(j)$ is 1 if $j=m$, and 0 otherwise. Hence,
\begin{equation}
    \bm{h}_{3}^{(t)}(j)=\sum_{j=0}^{c} \bm{h}_{2}^{(t)}(l,j)=\bm{\Psi}_{j,y_{t}}\cdot \bm{h}_{1}^{(t)}(j).
\end{equation}
Thus, $\bm{h}_{3}^{(t)}=\bm{h}_{1}^{(t)} \diag(\bm{\Psi}_{y_{t}})$.
\end{proof}
\begin{theorem}
\label{maintheorem}
An {\modelacro} with parameters $\bm{\pi}$ ($1 \times k$ stochastic vector), $\bm{P}$ ($k \times k$ stochastic matrix), and $\bm{\Psi}$ ($k \times c$ stochastic matrix), and with layers defined as in equations (\ref{eqn:GHMNN_1}-\ref{eqn:GHMNN_4}), produces output neuron $o^{(T)}$ 
whose value is the negative log-likelihood of a corresponding HMM.
\end{theorem}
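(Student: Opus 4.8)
The plan is to reduce the claim to the forward recursion already recorded in the preliminaries. Since the HMM log-likelihood of the sequence is $-\log(\bm{\alpha}_{T} \cdot \mathbf{1}_{k \times 1})$ and the output layer is defined by $o^{(T)} = -\log(\bm{h}_{3}^{(T)} \mathbf{1}_{k \times 1})$ in equation (\ref{eqn:GHMNN_4}), it suffices to prove that $\bm{h}_{3}^{(t)} = \bm{\alpha}_{t}$ for every $t \in \{0, 1, \dots, T\}$. I would establish this by induction on $t$, using Lemma \ref{lemma1} as the workhorse at each step; the only nontrivial content is verifying that the hypothesis of that lemma, namely that the entries of $\bm{h}_{1}^{(t)}$ lie in $[0,1]$, holds throughout the recursion. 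As in the excerpt, I take $N=1$, the extension to $N>1$ being immediate by additivity of log-likelihoods over independent sequences.

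For the base case $t = 0$, equation (\ref{eqn:GHMNN_1}) gives $\bm{h}_{1}^{(0)} = \bm{\pi}^\top$, whose entries lie in $[0,1]$ since $\bm{\pi}$ is stochastic; Lemma \ref{lemma1} then yields $\bm{h}_{3}^{(0)} = \bm{\pi}^\top \diag(\bm{\Psi}_{y_{0}})$, which matches the definition of $\bm{\alpha}_{0}$ in equation (\ref{eqn:alpha}). For the inductive step I would assume $\bm{h}_{3}^{(t-1)} = \bm{\alpha}_{t-1}$ and use equation (\ref{eqn:GHMNN_1}) to write $\bm{h}_{1}^{(t)} = \bm{h}_{3}^{(t-1)} \bm{P} = \bm{\alpha}_{t-1} \bm{P}$. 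After checking the range hypothesis (below), Lemma \ref{lemma1} gives $\bm{h}_{3}^{(t)} = \bm{\alpha}_{t-1} \bm{P} \diag(\bm{\Psi}_{y_{t}})$, which is exactly $\bm{\alpha}_{t}$ by the recursive form of equation (\ref{eqn:alpha}). Taking $t = T$ and substituting into equation (\ref{eqn:GHMNN_4}) then gives $o^{(T)} = -\log(\bm{\alpha}_{T} \mathbf{1}_{k \times 1}) = -\log \textrm{Pr}(\bm{y})$, the negative log-likelihood.

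The main obstacle is the range hypothesis of Lemma \ref{lemma1}, which is precisely what makes the ReLu in equation (\ref{eqn:GHMNN_2}) reproduce multiplication by $\diag(\bm{\Psi}_{y_{t}})$: when $y_{t} \neq l$, the corresponding unit equals $\relu(\bm{\Psi}_{m,l} \cdot \bm{h}_{1}^{(t)}(m) - 1)$, which vanishes only if $\bm{\Psi}_{m,l} \cdot \bm{h}_{1}^{(t)}(m) \le 1$, and this is guaranteed exactly when $\bm{h}_{1}^{(t)}(m) \le 1$, since $\bm{\Psi}_{m,l} \le 1$. I would therefore carry, as a strengthened induction hypothesis, that each $\bm{\alpha}_{t}$ is a subprobability vector, i.e.\ nonnegative with entries summing to at most $1$. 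This holds probabilistically, since $\alpha_{t}(i) = \textrm{Pr}(x_{t} = i, y_{0}, \dots, y_{t}) \le 1$, but also algebraically: $\bm{\pi}^\top$ is a probability vector, right-multiplication by the row-stochastic matrix $\bm{P}$ preserves nonnegativity and the coordinate sum, and right-multiplication by $\diag(\bm{\Psi}_{y_{t}})$ scales each coordinate by a factor in $[0,1]$ and hence can only shrink the sum. Either argument confirms that $\bm{h}_{1}^{(t)} = \bm{\alpha}_{t-1} \bm{P}$ has entries in $[0,1]$, so Lemma \ref{lemma1} applies at every step and the induction closes.
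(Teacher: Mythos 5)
Your proposal is correct and follows essentially the same route as the paper's own proof: an induction showing $\bm{h}_{3}^{(t)}=\bm{\alpha}_{t}$ via Lemma \ref{lemma1}, with the base case $\bm{h}_{3}^{(0)}=\bm{\pi}^\top\diag(\bm{\Psi}_{y_0})=\bm{\alpha}_0$ and the step $\bm{h}_{3}^{(t)}=\bm{h}_{3}^{(t-1)}\bm{P}\,\diag(\bm{\Psi}_{y_t})$. Your only departure is a welcome one: you explicitly verify the range hypothesis of Lemma \ref{lemma1} by showing that each $\bm{\alpha}_{t}$ is a subprobability vector preserved under right-multiplication by the stochastic $\bm{P}$, a step the paper asserts without justification.
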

\begin{proof}
Note that, based on Lemma \ref{lemma1} and equation (\ref{eqn:GHMNN_1}), $\bm{h}_{3}^{(t)}=\bm{h}_{3}^{(t-1)} \cdot \bm{P} \cdot \diag(\bm{\Psi}_{y_{t}})$ for $t \in \{1,...,T\}$, assuming that $\bm{h}_{1}^{(t)}(j) \in [0,1]$ for $j \in \{1,..,k\}$. Since $\bm{\alpha}_{t}=\bm{\alpha}_{t-1} \cdot \bm{P} \cdot \diag(\bm{\Psi}_{y_t})$, then if $\bm{h}_{3}^{(t-1)}=\bm{\alpha}_{t-1}$, then
$\bm{h}_{1}^{(t)}(j) \in [0,1]$ for $j \in \{1,..,k\}$ and therefore $\bm{h}_{3}^{(t)}=\bm{\alpha}_{t}$. We show the initial condition that $\bm{h}_{3}^{(0)}=\bm{\alpha}_{0}$, since $\bm{h}_{1}^{(0)}=\bm{\pi}^\top$ implies that $\bm{h}_{3}^{(0)}=\bm{\pi}^\top \cdot \diag(\bm{\Psi}_{y_{0}})=\bm{\alpha}_{0}$. Therefore, by induction, $\bm{h}_{3}^{(T)}=\bm{\alpha}_{T}$, and $o^{(T)}=-\log(\bm{\alpha}_{T} \cdot \bm{1}_{k \times1})$, which is the logarithm of the HMM likelihood based on equation (\ref{eqn:alpha}).
\end{proof}
\section{Experiments and Results}\label{sec:results}
In study 1, we demonstrate that when an HMRNN is trained on the same data as an HMM, the two models yield statistically similar parameter estimates.
More specifically, study 1 provides a simulation-based validation of Theorem \ref{maintheorem},
according to which an HMRNN and an HMM trained on the same data share the same likelihood.

In study 2, we use disease progression data to demonstrate that an HMRNN combining an HMM with an additional predictive neural network attains better predictive performance over its HMM component, while still yielding an interpretable parameter solution. Theorem \ref{maintheorem} does not apply to study 2, since the inclusion of an additional predictive neural network to the HMRNN yields a likelihood function different from the likelihood of the consituent HMM.

\subsection{Study 1: HMRNN Reduces to an HMM}\label{sec:study1}
We demonstrate that an {\modelacro} trained via gradient descent yields statistically similar solutions to Baum-Welch.  We show this with synthetically-generated observations sequences for which the true HMM parameters are known.

We simulate systems with state spaces $S={1,2,...,k}$ that begin in state $1$, using $k=5$, $10$, or $20$ states. These state sizes are consistent with disease progression HMMs, which often involve less than 10 states \cite{tumor_cytogenetics,jackson_multistate_models,sukkar_hmm}. We assume that each state `corresponds' to one observation, implying the same number of states and observations ($c=k$). The probability of correctly observing a state ($P(y_{t}=x_{t})$) is $\psi_{ii}$, which is the diagonal of $\bm{\Psi}$ and is the same for all states. We simulate systems with $\psi_{ii}=0.6, 0.75$, and $0.9$.

We test three variants of the transition probability matrix $\bm{P}$. Each is defined by their same-state transition probability $p_{ii}$, which is the same for all states. For all $\bm{P}$ the probability of transitioning to higher states increases with state membership; this is known as `increasing failure rate' and is a common property for Markov processes. As $p_{ii}$ decreases, the rows of $\bm{P}$ stochastically increase, i.e., lower values of $p_{ii}$ imply a greater chance of moving to higher states. We use values of $p_{ii}=0.4, 0.6$, and $0.8$, for 27 total simulations ($k=\{5,10,20\} \times \bm{\Psi}_{ii}=\{0.6,0.75,0.9\} \times p_{ii}=\{0.4,0.6,0.8\}$).

For each of the 27 simulations, we generate 100 trajectories of length $T=60$; this time horizon might practically represent one hour of data collected each minute or two months of data collected each day. Initial state probabilities are fixed at $1$ for state $1$ and $0$ otherwise. Transition parameters are initialized based on the observed number of transitions in each dataset, using each observation as a proxy for its corresponding state. Since transition probabilities are initialized assuming no observation error, the emission matrices are correspondingly initialized using $\psi_{ii}=0.95$ (with the remaining 0.05 distributed evenly across all other states). For Baum-Welch and HMRNN, training ceased when all parameters ceased to change by more than 0.001. For each simulation, we compare Baum Welch's and the HMRNN's average Wasserstein distance between the rows of the estimated and ground truth $\bm{P}$ and $\bm{\Psi}$ matrices. This serves as a measure of each method's ability to recover the true data-generating parameters. We also compare the Baum-Welch and HMRNN solutions' log-likelihoods using a separate hold-out set of 100 trajectories.

Across all simulations, the average Wasserstein distance between the rows of the true and estimated transition matrices was 0.191 for Baum-Welch and 0.178 for HMRNN (paired $t$-test $p$-value of 0.483). For the emission matrices, these distances were 0.160 for Baum-Welch and 0.137 for HMRNN (paired $t$-test $p$-value of 0.262). This suggests that Baum-Welch and the HMRNN recovered the ground truth parameters with statistically similar degrees of accuracy. This can be seen in Figure \ref{fig:p_recovery}, which presents the average estimated values of $p_{ii}$ and $\psi_{ii}$ under each model. Both models' estimated $p_{ii}$ values are, on average, within 0.05 of the ground truth values, while they tended to estimate $\psi_{ii}$ values of around 0.8 regardless of the true $\psi_{ii}$. Note that, while Baum-Welch was slightly more accurate at estimating $p_{ii}$ and $\psi_{ii}$, the overall distance between the ground truth and estimated parameters did not significantly differ between Baum-Welch and the HMRNN.

For each simulation, we also compute the log-likelihood of a held-out set of 100 sequences under the Baum-Welch and HMRNN parameters, as a measure of model fit. The average holdout log-likelihoods under the ground truth, Baum-Welch, and HMRNN parameters are -9250.53, -9296.03, and -9303.27, respectively (paired $t$-test $p$-value for Baum-Welch/HMRNN difference of 0.440). Thus, Baum-Welch and HMRNN yield similar model fit on held-out data.

\begin{figure}
    \centering
    \includegraphics[scale=0.3]{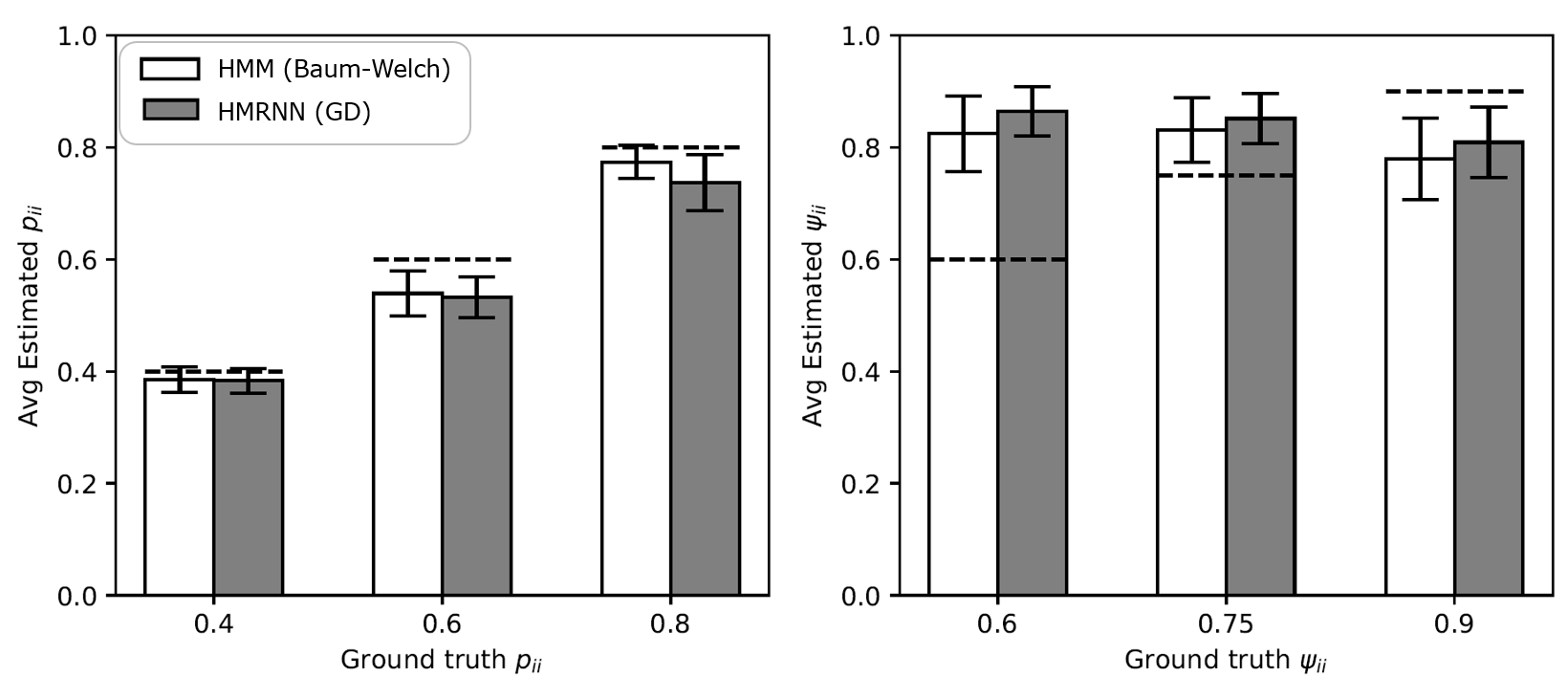}
    \caption{Results from Study 1. GD=Gradient Descent. Estimated $p_{ii}$ (left) and $\psi_{ii}$ (right) under Baum-Welch and HMRNN, shown by ground truth parameter value. Results for each column are averaged across 9 simulations. Dashed lines indicate ground truth $p_{ii}$ (left) and $\psi_{ii}$ (right) values, and error bars indicate 95\% confidence intervals (but do not represent tests for significant differences). Baum-Welch and the HMRNN produce near-identical parameter solutions according to the Wasserstein distance metric.}
    \label{fig:p_recovery}
\end{figure}

\subsection{Study 2: HMRNN Improves Predictive Accuracy over HMM}
\label{sec:study2}
We demonstrate how combining an {\modelacro} with other predictive neural networks improves predictive accuracy and offers novel clinical interpretations over a standard HMM, using an Alzheimer's disease case study. Recall that, by incorporating an additional predictive neural network into an {\modelacro}, its likelihood function differs from that of a traditional HMM but it still produces interpretable state transition and emission parameters. We test our {\modelacro} on clinical data from $n=426$ patients with mild cognitive impairment (MCI), collected over the course of three ($n=91$), four ($n=106$), or five ($n=229$) consecutive annual clinical visits \cite{adni}. Given MCI patients' heightened risk of Alzheimer's, modeling their symptom progression is of considerable clinical interest. We analyze patients' overall cognitive functioning based on the Mini Mental Status Exam (MMSE; \cite{MMSEref}).

MMSE scores range from 0 to 30, with score categories for `no cognitive impairment' (scores of 27-30), `borderline cognitive impairment' (24-26), and 'mild cognitive impairment' (17-23) \cite{MMSE_cutoffs_review}. Scores below 17 were infrequent (1.2\%) and were treated as scores of 17 
for analysis. We use a 3-state latent space $S=\{0,1,2\}$, with $x_{t}=0$ representing `no cognitive impairment,' $x_{t}=1$ representing `borderline cognitive impairment,' and $x_{t}=2$ representing `mild cognitive impairment.' The observation space is $O=\{0,1,2\}$, using $y_{t}=0$ for scores of $27-30$, $y_{t}=1$ for scores of $24-26$, and $y_{t}=2$ for scores of $17-23$. This HMM therefore allows for the possibility of measurement error, i.e., that patients' observed score category $y_{t}$ may not correspond to their true diagnostic classification $x_{t}$.

To showcase the benefits of the {\modelacro}'s modularity, we 
augment it with two predictive neural networks. First, we predict patient-specific initial state probabilities based on gender, age, degree of temporal lobe atrophy, 
and amyloid-beta 42 levels (A$\beta$42, a relevant Alzheimer's biomarker \cite{abref_2}), using a single-layer neural network with a softmax activation. Second, at each time point, the probability of being in the most impaired state,  
$\bm{h}_{t}^{(1)}(2)$, 
is used to predict concurrent scores on the Clinical Dementia Rating (CDR, \cite{CDRref}), a global assessment of dementia severity, allowing another relevant clinical metric to inform paramter estimation. We use a single connection and sigmoid activation to predict patients' probability of receiving a CDR score above 0.5 (corresponding to `mild dementia'). The HMRNN is trained via gradient descent to minimize $o^{(T)}$ from equation (\ref{eqn:GHMNN_4}), plus the predicted negative log-likelihoods of patients' CDR scores. Figure \ref{fig:mmse_hmrnn} visualizes the structure of this augmented HMRNN.

\begin{figure}[t]
    \centering
    \includegraphics[scale=.15]{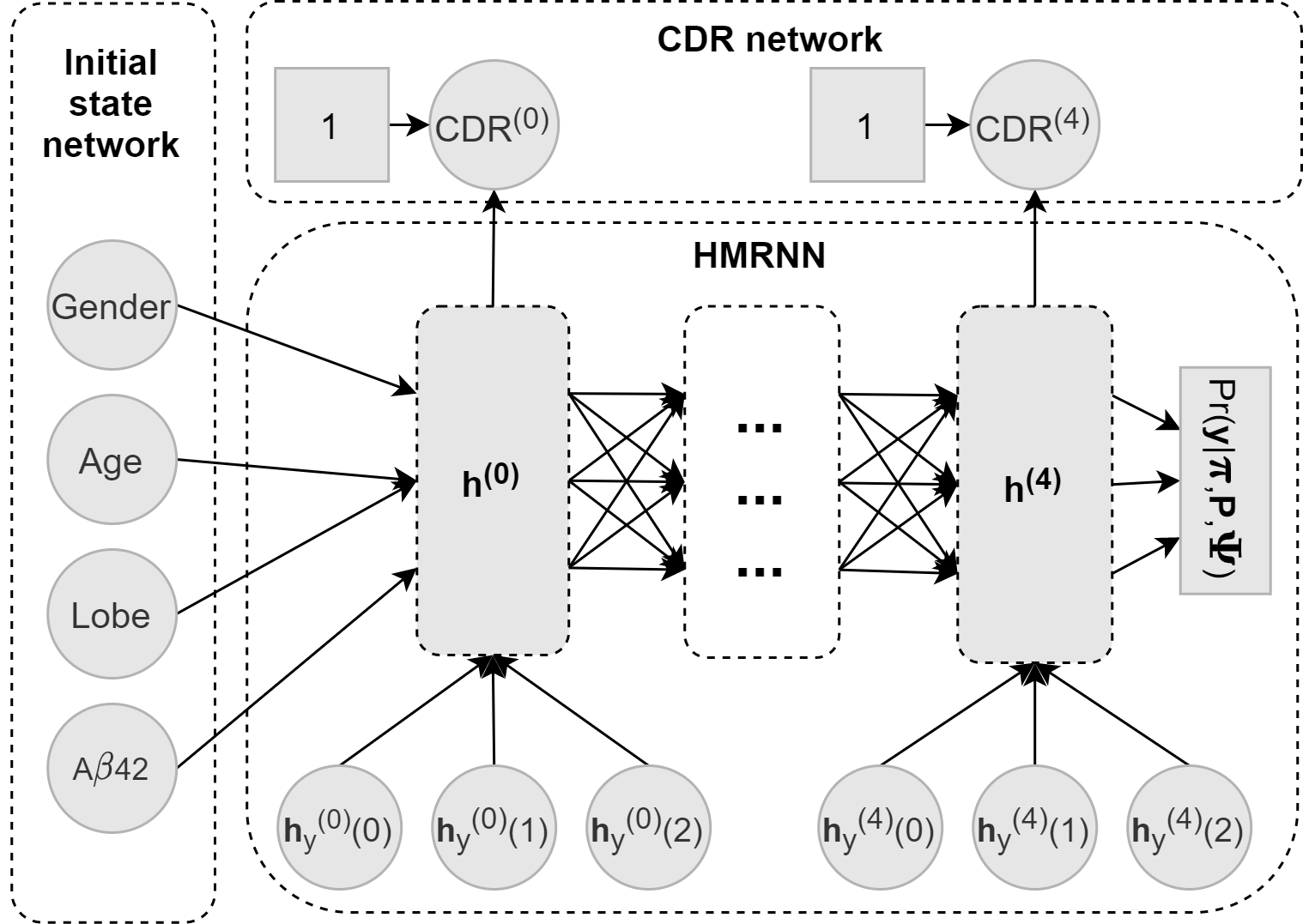}
    \caption{Augmented HMRNN for Alzheimer's case study. CDR$^{(t)}$ refers to predicted CDR classification (above or below 0.5) at time $t \in \{0,1,2,3,4\}$. `Lobe' refers to measure of temporal lobe atrophy. Units $\bm{h}_{y}^{(t)}$ are a one-hot encoded representation of the MMSE score category at time $t$.}
    \label{fig:mmse_hmrnn}
\end{figure}

We compare the {\modelacro} to a standard HMM without these neural network augmentations, trained using Baum-Welch, an expectation-maximization algorithm \cite{hmm_seminal}. We assess parameter solutions' ability to predict patients' final MMSE score categories from their initial score categories, using 10-fold cross-validation. We evaluate performance using weighted log-loss $L$, 
i.e., the average log-probability placed on each final MMSE score category. This metric accounts for class imbalance and rewards models' confidence in their predictions, an important component of medical decision support \cite{bussone_trust}. We also report $\bar{p}$, the average probability placed on patients' final MMSE scores (computed directly from $L$). We train all models using a relative log-likelihood tolerance of $0.001\%$. Runtimes for Baum-Welch and the HMRNN are 2.89 seconds and 15.24 seconds, respectively.

Model results appear in Table \ref{tab:mmse_parameters}. Note that the {\modelacro}'s weighted log-loss $L$ is significantly lower than Baum-Welch's (paired $t$-test $\mbox{p-value}=2.396\times10^{-6}$), implying greater predictive performance. This is supported by Figure \ref{fig:mmse_plot}, which shows $\bar{p}$, the average probability placed on patients' final MMSE scores by score category. Note that error bars represent marginal sampling error and do not represent statistical comparisons between Baum-Welch and HMRNN.
The {\modelacro} also yields lower transition probabilities and lower estimated diagnostic accuracy for the MMSE (i.e., lower diagonal values of $\bm{\Psi}$) than Baum-Welch, For instance, the baseline HMM estimates at least an 80\% chance of correctly identifying borderline and mild cognitive impairment ($\bm{\Psi}_{22} = 0.819$ and $\bm{\Psi}_{33} = 0.836$). These probabilities are (respectively) only 54.8\% and 68.7\% under the {\modelacro},
suggesting that score changes are more likely attributable to testing error as opposed to true state changes.

\begin{table}[t]
\caption{Results from Alzheimer's disease case study. $\bm{\pi}$ is initial state distribution, $\bm{P}$ is state transition matrix, $\bm{\Psi}$ is emission distribution matrix, $L$ is weighted log-loss, and $\bar{p}$ is average probability placed on ground truth score categories.}
    \centering
    \begin{tabular}{|l|c|c|}
         \hline
         & Baum-Welch & {\modelacro} \\
         \hline
         $\bm{\pi}$ & $ \begin{array}{ccc} 0.727 & 0.271 & 0.002 \end{array}$ & $ \begin{array}{ccc} 0.667 & 0.333 & 0.000 \end{array}$ \\
         \hline
         $\bm{P}$ & $\begin{array}{ccc} 0.898 & 0.080 & 0.022 \\ 0.059 & 0.630 & 0.311 \\ 0.000 & 0.016 & 0.984  \end{array}$ & $\begin{array}{ccc} 0.970 & 0.028 & 0.002 \\ 0.006 & 0.667 & 0.327 \\ 0.000 & 0.003 & 0.997  \end{array}$\\
         \hline
         $\bm{\Psi}$ & $\begin{array}{ccc} 0.939 & 0.060 & 0.001  \\ 0.175 & 0.819 & 0.006 \\ 0.004 & 0.160 & 0.836 \end{array}$ & $\begin{array}{ccc} 0.930 & 0.067 & 0.003 \\ 0.449 & 0.548 & 0.003 \\ 0.005 & 0.308 & 0.687  \end{array}$ \\
         \hline
         $L$ & -0.992 & -0.884 \\
         \hline
         $\bar{p}$ & 0.371 & 0.413 \\
         \hline

    \end{tabular}
    \label{tab:mmse_parameters}
\end{table}
\begin{figure}
    \centering
    \includegraphics[scale=0.3]{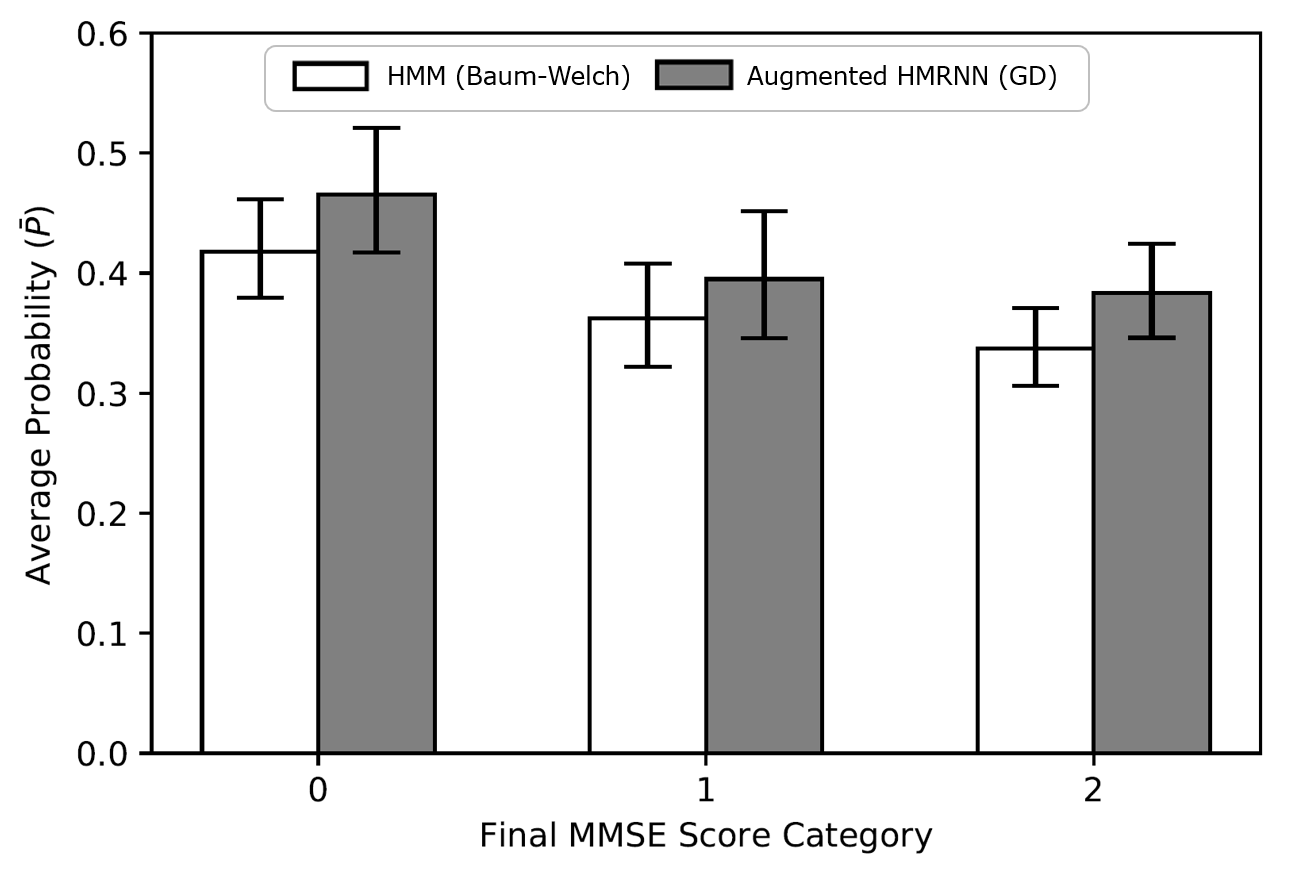}
    \caption{Results from study 2. GD=Gradient Descent. Plot shows average probability placed on final MMSE scores, by score category. Recall that the HMRNN's average performance significantly outperforms Baum-Welch (paired $t$-test $p$-value$=2.396\times10^{-6}$). As see in the Figure, this effect is consistent across score categories. Error bars indicate 95\% confidence intervals, and do not represent tests for significant differences.}
    \label{fig:mmse_plot}
\end{figure}

\section{Discussion}\label{sec:discussion}
We outline a flexible approach for HMM estimation using neural networks. The {\modelacro} produces statistically similar solutions to HMMs when trained on the same data. It can also combine HMMs with other neural networks to improve disease progression forecasting when additional patient data is available. 
In our Alzheimer's disease experiment (study 2), augmenting an {\modelacro} with two predictive networks improves forecasting performance compared with a standard HMM trained with Baum-Welch. The {\modelacro} also yields a clinically distinct parameter interpretation, predicting poor diagnostic accuracy for the MMSE's `borderline' and `mild' impairment categories. This suggests that fewer diagnostic categories might improve MMSE utility, which aligns with existing research \cite{MMSE_cutoffs_review} and suggests the {\modelacro} might be used to improve the clinical utility of HMM parameter solutions. We also make a novel theoretical contribution by formulating discrete-observation HMMs as a special case of RNNs and proving coincidence of their likelihood functions.


Future work might formally assess {\modelacro} time complexity. Yet since data sequences in healthcare are often shorter than in other domains that employ HMMs (e.g., speech analysis), runtimes will likely be reasonable for many healthcare datasets. Future work might explore the {\modelacro} in other healthcare applications besides disease progression. Lastly, while we the address the case of discrete-state, discrete-time HMMs, neural networks might also be used to implement more complex HMM structures. For instance, the HMRNN might be extended to continuous-time HMMs, for which parameter estimation is quite difficult. The HMRNN also might be extended to partially-observable Markov decision processes (POMDPs), in which latent state transitions are affected by actions taken at each time point. Since actions are a form of time-varying covariates, the HMRNN structure would easily allow for parameter estimation when action data are available.

\section*{Acknowledgment}
This research is partially supported by the Joint Directed Research and Development program at Science Alliance, University of Tennessee. Data collection and sharing for this project was funded by the Alzheimer's Disease Neuroimaging Initiative (ADNI) (National Institutes of Health Grant U01 AG024904) and DOD ADNI (Department of Defense award number W81XWH-12-2-0012).

This manuscript has been authored by UT-Battelle, LLC, under contract DE-AC05-00OR22725 with the US Department of Energy (DOE). The US government retains and the publisher, by accepting the article for publication, acknowledges that the US government retains a nonexclusive, paid-up, irrevocable, worldwide license to publish or reproduce the published form of this manuscript, or allow others to do so, for US government purposes. DOE will provide public access to these results of federally sponsored research in accordance with the DOE Public Access Plan (http://energy.gov/down-loads/doe-public-access-plan). This research was sponsored by the Laboratory Directed Research and Development Program of Oak Ridge National Laboratory, managed by UT-Battelle, LLC, for the US Department of Energy under contract DE-AC05-00OR22725.


\end{document}